\newtheorem{theorem}{Theorem}
\newtheorem{lemma}{Lemma}
\newtheorem{proof}{Proof}
\title{Distance-rank Aware Sequential Reward Learning for Inverse Reinforcement Learning with Sub-optimal Demonstrations}
\author{%
Lu Li$^{1}$\thanks{Equal Contribution} \quad Yuxin Pan$^{2\ast}$ \quad Ruobing Chen$^{3}$ \quad Jie Liu$^{4}$ \quad Zilin Wang$^{1}$ \quad Yu Liu$^{3}$\thanks{Corresponding authors} \quad Zhiheng Li$^{1\dagger}$ \\
$^{1}$Tsinghua University, Beijing, China \\
$^{2}$Hong Kong University of Science and Technology, Hong Kong SAR, China \\
$^{3}$SenseTime Research, Beijing, China \\
$^{4}$Chinese University of Hong Kong, Hong Kong SAR, China\\
}
\begin{document}

\maketitle

\begin{abstract}
  Inverse reinforcement learning (IRL) aims to explicitly infer an underlying reward function based on collected expert demonstrations. Considering that obtaining expert demonstrations can be costly, the focus of current IRL techniques is on learning a better-than-demonstrator policy using a reward function derived from sub-optimal demonstrations. However, existing IRL algorithms primarily tackle the challenge of trajectory ranking ambiguity when learning the reward function. They overlook the crucial role of considering the degree of difference between trajectories in terms of their returns, which is essential for further removing reward ambiguity. Additionally, it is important to note that the reward of a single transition is heavily influenced by the context information within the trajectory. To address these issues, we introduce the Distance-rank Aware Sequential Reward Learning (DRASRL) framework. Unlike existing approaches, DRASRL takes into account both the ranking of trajectories and the degrees of dissimilarity between them to collaboratively eliminate reward ambiguity when learning a sequence of contextually informed reward signals. Specifically, we leverage the distance between policies, from which the trajectories are generated, as a measure to quantify the degree of differences between traces. This distance-aware information is then used to infer embeddings in the representation space for reward learning, employing the contrastive learning technique. Meanwhile, we integrate the pairwise ranking loss function to incorporate ranking information into the latent features. Moreover, we resort to the Transformer architecture to capture the contextual dependencies within the trajectories in the latent space, leading to more accurate reward estimation. Through extensive experimentation on MuJoCo tasks and Atari games, our DRASRL framework demonstrates significant performance improvements over previous state-of-the-art IRL methods. Notably, we achieve an impressive 0.99 correlation between the learned reward and the ground-truth reward. The resulting policies exhibit a remarkable $95\%$ performance improvement in MuJoCo tasks.
\end{abstract}

\section{Introduction}
Reinforcement learning (RL) is a learning paradigm that involves automatically acquiring an optimal policy through interactions with an environment, guided by a predefined reward function. In scenarios where manually designing a reward function is challenging, inverse reinforcement learning (IRL) offers a viable alternative by inferring a reward function from expert demonstrations. The inferred reward function captures the implicit preferences and goals of the expert, enabling the agent to generalize their behavior and learn a policy that aligns with their expertise. However, collecting a large amount of expert demonstration data can be challenging and expensive, especially in domains such as robotic operations~\cite{kober2013reinforcement} and autonomous driving~\cite{kiran2021deep, pan2020navigation}. Consequently, current IRL methods~\cite{brown2019extrapolating,brown2019drex, chen2021learning} prioritize Learning from Sub-optimal Demonstrations (LfSD). These methods make use of sub-optimal demonstrations to derive a reward function that partially captures the underlying task goals. The objective is to learn a policy that extrapolates beyond the limitations of the provided sub-optimal demonstrations by utilizing the derived reward function. In contrast, Imitation learning (IL) approaches~\cite{hussein2017imitation} are inherently limited by the available demonstrations, which poses a challenge when attempting to incorporate LfSD techniques into IL frameworks.

Existing methods for inferring a reward function from sub-optimal demonstrations typically rely on utilizing the rank information of trajectory pairs. These methods employ pairwise ranking loss functions to effectively encapsulate the underlying preferences over each trajectory for the viable reward learning. The rank information used in these methods can be annotated by human experts~\cite{brown2019extrapolating}. However, manually estimating the rank for each pair of trajectories requires a substantial amount of human intervention, which hinders the scalability and applicability of these methods in real-world settings. D-REX~\cite{brown2019drex} extends this approach by automating the creation of ranked demonstrations through the incorporation of various levels of noise into a behavioral cloning (BC) policy. However, these methods primarily focus on reducing reward ambiguity in reward learning by considering trajectory ranking alone. As a result, they neglect the potential of utilizing the degree of difference between trajectories to eliminate the reward ambiguity in sub-optimal demonstrations. In addition, previous IRL methods place their primary emphasis on estimating the reward for each transition individually, and fail to take into account the contextual information present within the same trace during the reward learning process.

In the standard RL paradigm, the reward function can evaluate each pair of actions given the same state by providing two scalar rewards. By comparison, trajectory ranking information only partially fulfills the functionality of the standard reward function by indicating which trajectory has a higher cumulative reward. On the other hand, the standard reward function can effectively indicate the difference in cumulative rewards between trajectories, providing a more comprehensive measure of trajectory preference. However, the degree of dissimilarity in terms of returns between trajectories, which may offer valuable insights into the relative quality of traces, cannot be fully captured by solely considering trajectory ranking. \emph{The ranking approach enables us to discern the preferred trajectory, while considering the degree of difference provides valuable insights into the magnitude of preference that one trajectory holds over the other.} We argue that both trajectory ranks and the degree of differences are essential for obtaining a comprehensive understanding of the quality and performance of trajectories. In addition, in certain scenarios such as partial observability, the evaluation of each transition is significantly influenced by the contextual information present within the same trajectory. We contend that it is essential to sequentially model the reward signals to incorporate context dependencies into reward learning.

In this paper, we present the Distance-rank Aware Sequential Reward Learning (DRASRL) framework as a solution to address these challenges. Our proposed framework takes into consideration both trajectory ranking and the degree of dissimilarity between trajectory pairs to effectively mitigate ambiguity during the learning process of a sequence of contextually informed reward signals. By incorporating both aspects collaboratively, we aim to achieve a more robust and accurate reward learning approach that captures the nuances and preferences present in the demonstrations. To be specific, the absence of the difference in corresponding returns contributes to the persistent challenge of ambiguity arising from the degree of dissimilarity between each pair of trajectories. We thus make a reasonable assumption that the degree of difference in terms of returns between a pair of trajectories is proportional to the distance of the policies from which those trajectories are generated. To effectively leverage the insights provided by the assumption, we utilize the policy distance as a means to quantify the distance between trajectories in the latent space. Instead of using policy distance as the regression target for predicting return differences, this method utilizes contrastive learning techniques to learn a distance-aware representation for reward learning by using the policy distance as a "soft label". In addition to encoding the absolute differences in latent space, DRASRL integrates a pairwise ranking loss to capture relative rank-aware representations. Moreover, to incorporate the contextual relationship between transitions into reward learning, the Transformer architecture~\cite{vaswani2017attention} is leveraged by taking a sequence of state-action pairs as inputs and generating sequential reward signals as outputs. We conduct extensive experiments across standard benchmarks, including Atari games~\cite{mnih2013playing} and MuJoCo locomotion tasks~\cite{todorov2012mujoco}. Compared to previous state-of-the-art techniques, our approach exhibits superior reward accuracy and improved performance of the trained policies. Notably, DRASRL yields a remarkable 0.99 correlation with the ground-truth reward, and the policies showcase a substantial $95\%$ performance improvement in the MuJoCo HalfCheetah task.

\begin{figure*}[t]
\begin{center}
\centerline{\includegraphics[width=0.9\linewidth]{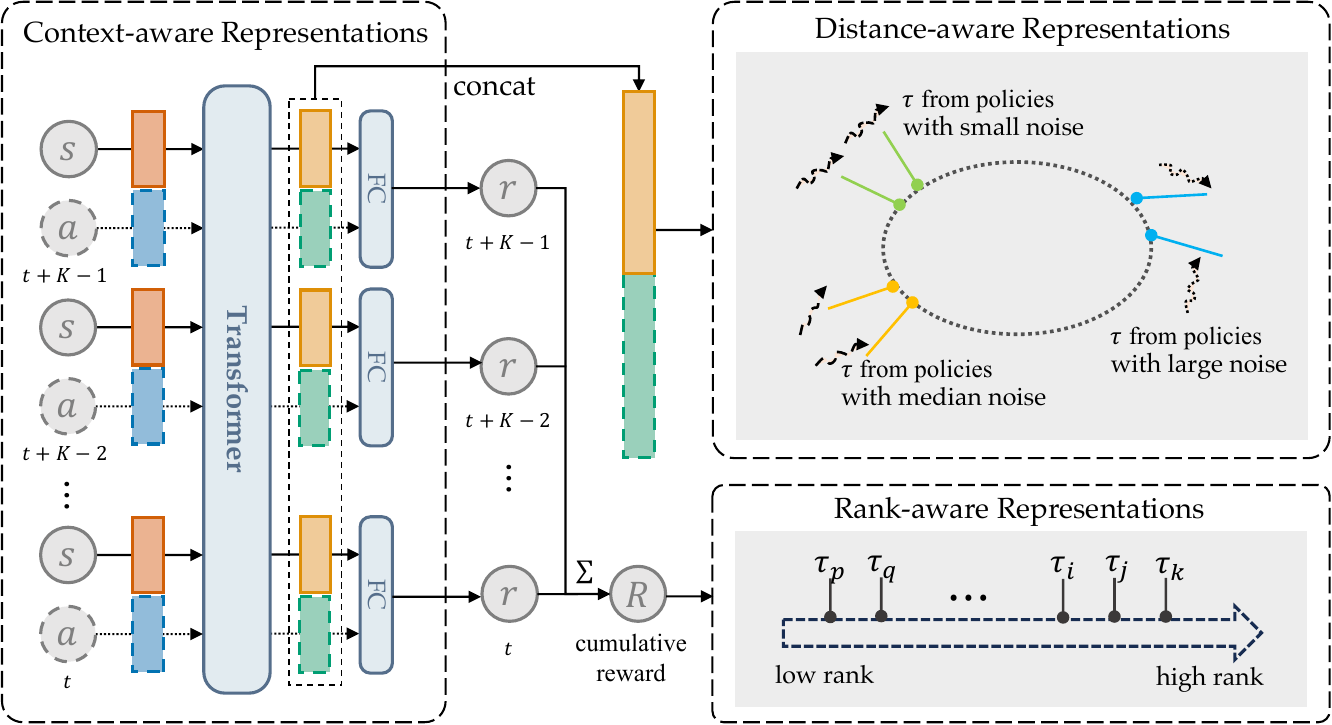}}
\caption{The overall pipeline of DRASRL framework. State-action pairs are initially passed through modality-specific linear embeddings. These embeddings are then fed into a transformer architecture without causal self-attention masks. The transformer generates latent features, which are subsequently processed by two separate branches in the model. The first branch focuses on learning the representations from the degree of difference between trajectory pairs. The second branch is dedicated to learning the latent features from the ranking information of each trajectory.}
\label{fig:drasrl}
\end{center}
\end{figure*}

\section{Related Work}
\textbf{Learning from Demonstration.} Learning from demonstration (LfD) has grown increasingly popular in recent years.
A direct way of learning from demonstration is behavior cloning~\cite{bain1995framework, pan2022backward} in which policies emerge from the demonstrations via supervised learning. However, the performance of the behavior cloning policy is typically bounded by the demonstrator.
Instead of learning a mapping from state to action like behavior cloning, inverse reinforcement learning seeks to find an underlying reward function that explains the expert’s intention and goal, while there is a large body of research. 
Maximum Entropy IRL~\cite{ziebart2008maximum} and Max Margin IRL~\cite{abbeel2004apprenticeship} were proposed to solve the ill-posed problem in IRL.
Furthermore, guided cost learning~\cite{finn2016guided} and adversarial IRL~\cite{fu2017learning} learned the reward model and policy using the generative adversarial framework~\cite{goodfellow2014generative}.

Most of the previous LfD works assume that their demonstrations are provided by experts. However, it is difficult to obtain optimal demonstrations in many real-world tasks. By assuming that only sub-optimal demonstrations are available, our work aims to learn from sub-optimal demonstrations and achieve better-than-demonstrator performance.

\textbf{Learning from Sub-optimal Demonstration.}
Not many works tried to learn good policies from sub-optimal demonstrations. ~\citet{syed2007game} proved that an apprenticeship policy that guarantees superiority over the demonstrator can be found, by knowing which features have a positive or negative effect on the true reward. However, their method requires hand-crafted, linear features and knowledge of true signs.
Recently, T-REX~\cite{brown2019extrapolating} leveraged rank information provided by experts to learn a reward model and significantly outperform the demonstrator. Furthermore, based on T-REX, D-REX~\cite{brown2019drex} automatically generates ranked demonstrations by different noise injections to a behavior cloning policy. 
\citet{chen2021learning} empirically studied the noise-performance relationship and proposed an assumption that the relationship can be described by a four-parameter sigmoid function. Building on this assumption, their Self-Supervised Reward Regression (SSRR) provides state-of-the-art performance on continuous control tasks. ~\citet{cui2021airl} performed an in-depth study on SSRR, and their experimental results showed that the main reason for extrapolating beyond sub-optimal is enforcing the reward function to extrapolate in the direction of ``noise is worse", not the specific form of the reward function.

Our work follows the motivation of D-REX and can be seen as a form of preference-based inverse reinforcement learning (PBIRL) ~\cite{sugiyama2012preference, wirth2017survey} which is based on preference rankings over demonstrations.

\textbf{Self-Supervised Learning.}
Self-supervised learning (SSL) aims to learn rich representations from the enormous available unlabeled data to boost the downstream tasks. These years, SSL achieved many amazing progresses in computer vision ~\cite{he2022masked, grill2020bootstrap, he2020momentum} and natural language processing ~\cite{devlin2018bert, radford2018improving}. Some of them even outperformed the performance with supervised learning ~\cite{he2022masked}. Meanwhile, SSL also widely appears in RL. Specifically, CURL ~\cite{laskin2020curl} builds a contrastive task between different views of the same observations to accelerate the image encoder convergence. SODA ~\cite{hansen2021generalization} maximized the mutual information between augmented and non-augmented data in latent space, which is conducive for generalization in RL. Inspired by the success of SSL, Our method leverages contrastive learning techniques for IRL, aiming at learning a reward model with better generalization.

\section{Preliminaries}
Markov Decision Process (MDP)\cite{sutton2018reinforcement} is a mathematical framework used to model the environment in reinforcement learning (RL). It is typically represented as a tuple $(\mathcal{S}, \mathcal{A}, P, R, \gamma)$. At each time step, the agent observes a state $s_t \in \mathcal{S}$ and subsequently takes action $a_t \in \mathcal{A}$. The next state $s_{t+1}$ is then sampled from a stochastic transition function $P: \mathcal{S} \times \mathcal{A} \times \mathcal{S} \rightarrow [0,1]$, and the agent receives a reward $r_t = R(s_t, a_t)$. $\gamma$ denotes the discount factor. The objective of the agent is to learn a policy $\pi(a_t | s_t)$ that maximizes the expected cumulative reward over time.

In the context of inverse reinforcement learning (IRL), the ground truth reward function is unavailable to train the optimal policy. Furthermore, accessing the optimal expert demonstrations may not be feasible. Instead, the goal is to learn a reliable reward function parameterized by $\theta$, denoted as $R_{\theta}(s_t, a_t)$, using a set of collected sub-optimal demonstration trajectories $D = \{ \tau_1, \ldots, \tau_N \}$, where each trajectory $\tau = (s_0, a_0, s_1, a_1, \ldots)$ represents a sequence of states and actions. Such a reward function needs to generalize beyond the available logged data, allowing for extrapolation. By utilizing $R_{\theta}$, it becomes possible to derive a policy that surpasses the performance of the original demonstrator.

\section{Method}
In this section, we present the Distance-rank Aware Sequential Reward Learning (DRASRL) framework, which aims to incorporate context-awareness and consider both the distances and ranks between trajectories during reward learning. The DRASRL framework enables the creation of structured representations for sub-optimal trajectories, leading to more reliable reward signals. This, in turn, enhances the learning of policies by providing more informative and accurate guidance. 

Our method adopts the D-REX~\cite{brown2019drex} paradigm. D-REX involves utilizing behavior cloning to learn a policy $\pi_{\mathrm{BC}}$ from sub-optimal demonstrations $D$. Following this, various levels of noise are introduced to the outputs of $\pi_{\mathrm{BC}}$, generating trajectories with different performance levels. The underlying assumption is that as the level of noise increases, the performance of the cloned policy deteriorates, eventually converging to a random policy. Consequently, the ranks of the generated trajectories can be estimated based on the injected noise levels. Trajectories with lower levels of noise are assigned higher ranks, indicating better performance. The reward function $R_{\theta}$ is learned using supervised learning, employing a pairwise ranking loss with the ranked trajectories.

However, it is important to note that while D-REX primarily targets the ambiguity in trajectory ranking during reward learning, it does not explicitly address the ambiguity in the degree of difference between trajectories. We contend that both trajectory ranking and the degree of difference in terms of returns between trajectories are crucial factors in effectively distinguishing between each pair of trajectories during reward learning. This comprehensive approach enables us to gain insights into the relative preferences and facilitates the capture of nuanced distinctions between trajectories. Additionally, each singleton transition is strongly influenced by the contextual information present within that same trajectory. Thus, DRASRL incorporates the information of distances and ranks of trajectories into the representation space during sequential reward learning. This means that the representation of trajectories in the latent space is:

i) \textbf{Context-aware:} The transition representation can automatically adapt to the context information;
ii) \textbf{Distance-aware:} The trajectories with smaller dissimilarity are expected to exhibit similar features;
iii) \textbf{Rank-aware:} The feature representation of each trajectory contains sufficient information to determine its rank.

\subsection{Context-aware Representations}
The reward function $R_{\theta}$ is composed of a trainable representation network $\phi(\cdot)$ parameterized by $\theta_{\phi}$ and a linear mapping with parameter $\omega$. Unlike previous approaches that consider singleton reward for each transition, our method leverages sequential modeling to capture the dependencies and context information within trajectories, enhancing the accuracy and effectiveness of the reward function. Specifically, to acquire context information, the representation network takes all state-action pairs of a sub-trajectory of length $K$ as input, and outputs a sequence of context-aware representations, denoted as:
\begin{equation}
    (\bm{x}_{t+1}, \bm{x}_{t+2}, \cdots, \bm{x}_{t+K}) = \phi(\{ (s_{t+i}, a_{t+i}) \}_{i=1}^{i=K}; \theta_{\phi}),
\end{equation}
where $\bm{x}_{t+i}$ ($i\in \{1, \cdots, K \}$) is a $2d_{x}$-dimensional vector that is the concatenation of two $d_{x}$-dimensional vectors corresponding to $s_{t}$ and $a_{t}$, respectively. Each final reward is obtained by mapping the corresponding representation to a scalar using the linear mapping, denoted as $r_{t+i} = \omega^{\mathrm{T}} \bm{x}_{t+i}$.

To incorporate context information into the features, two suitable options are Bidirectional-LSTM~\cite{graves2005framewise} and transformer~\cite{vaswani2017attention}. Both models can take the sequence of state-action pairs as input and generate features that incorporate information from both past and future. However, considering the computational burden and the difficulty of capturing long-term dependencies with LSTM, we opt for the transformer. The transformer model has the advantage of parallel processing for sequence data and the potential to establish long-term dependencies between state-action pairs. Concretely, the raw state $s_{i}$ and action $a_{i}$ are represented by descriptors in different formats ($t$ is omitted for brevity.). To handle these heterogeneous descriptors, we utilize two separate Multi-Layer Perceptron (MLP) blocks. These MLP blocks project the descriptors into homogeneous feature representations: $\hat{\bm{x}}_{j}$ ($j\in[1, 2K]$), where $\hat{\bm{x}}_{2i-1}=f(s_{i}; \theta_{s}) \in \mathbb{R}^{d_{x}}$, $\hat{\bm{x}}_{2i}=f(a_{i}; \theta_{a}) \in \mathbb{R}^{d_{x}}$, $d_{x}$ is the dimension of each vector, and $f(;\theta)$ is an MLP block with its parameters $\theta$. Then, the Scaled Dot-Product Attention is applied to each feature for calculating the relation weight of one feature to another. These relation weights are normalized and used to compute the linear combination of features. Thus, the resulting feature of one state or action embedded by this attention layer is represented as:
\begin{equation}
    \bm{x}_{j}^{\prime} = W^{O} \sum_{m=1}^{2K} \mathrm{softmax} (\frac{(W^{Q}\hat{\bm{x}}_{j})^{\mathrm{T}} W^{K}\hat{\bm{x}}_{m}}{\sqrt{d_{k}}})W^{V}\hat{\bm{x}}_{m},
\end{equation}
where $W^{Q} \in \mathbb{R}^{d_{k} \times d_{x}}$, $W^{K} \in \mathbb{R}^{d_{k} \times d_{x}}$, $W^{V} \in \mathbb{R}^{d_{v} \times d_{x}}$, and $W^{O} \in \mathbb{R}^{d_{x} \times d_{v}}$ are projection matrices, $d_{k}$ and $d_{v}$ are dimensions of projected features. The attention layer and the feed-forward MLP block $f(;\theta_{FF})$ are cascaded together to get the refined features. In addition, the skip-connection operation is applied to both the attention layer and feed-forward MLP to alleviate the gradient vanishment, written as $\bm{x}_{j}^{\prime\prime} = \hat{\bm{x}}_{j} + \bm{x}_{j}^{\prime} + f(\hat{\bm{x}}_{j} + \bm{x}_{j}^{\prime}; \theta_{FF})$. The feature $\bm{x}_i$ for state-action pair $(s_i, a_i)$ is concatenated by the corresponding state feature ${\bm{x}}^{\prime\prime}_{2i-1}$ and action feature ${\bm{x}}^{\prime\prime}_{2i}$. As a result, the cumulative reward (return) $\mathcal{R}$ of this sub-trajectory is represented as: $\mathcal{R} = \sum_{i=1}^{K} r_{i} = \sum_{i=1}^{K} \omega^{\mathrm{T}}\bm{x}_{i}$. In addition, our sequence model is versatile and can also work with state sequences as input. In this case, the networks responsible for action representation can be removed.

\subsection{Distance-aware Representations}
In the field of IRL, it is a challenging task to learn a policy that outperforms the demonstrator's policy by optimizing a reward function inferred from sub-optimal demonstration trajectories. This is because the reward function learned in this manner often lacks the ability to generalize beyond the limitations of the imperfect demonstrators. As a result, it fails to provide a fair evaluation of the agent's immediate behavior. Recent empirical studies have demonstrated that utilizing ranked demonstrations in IRL can effectively reduce the ambiguity in the reward function. It promotes the generalization capabilities of the reward function, and consequently leads to the extrapolation beyond the demonstrators. 

To overcome the difficulty of obtaining the preferences over demonstrations, D-REX~\cite{brown2019drex} introduces a solution by injecting noise into a policy learned through behavioral cloning. This is done using imperfect demonstrations, which allows for the automatic generation of ranked demonstrations. The reward function is learned using a pairwise ranking loss, which leverages trajectories with diverse preferences. However, this paradigm primarily focuses on utilizing the relative ranking of trajectories and may overlook the significance of considering the degree of difference in terms of return between trajectories. We firmly believe that both trajectory ranks and the degree of differences play crucial roles in effectively alleviating ambiguity in reward learning. \emph{The ranking approach allows us to determine which trajectory, $\tau_i$ or $\tau_j$, is preferred, while considering the degree of difference provides insights into the magnitude of preference that $\tau_i$ has over $\tau_j$.} 

The challenge lies in effectively measuring the relative magnitude of preference of one trajectory over another. Given that these trajectories are generated from policies affected by various magnitudes of noise, policies with similar performance are likely to produce trajectories with similar magnitudes of preferences. Conversely, significant differences in executed policies result in substantial variations in the magnitudes of preferences between trajectories. 
We thus make a reasonable assumption that the difference between two trajectories, denoted as $\tau_i$ and $\tau_j$, in terms of returns can be measured by the distance between the corresponding policies from which they are generated. In other words, the difference in returns is proportional to the distance between the policies, denoted as:
\begin{equation}
    |\mathcal{R}_{i} - \mathcal{R}_{j}| \propto \mathrm{Dist}(\pi_{i}||\pi_{j}),
\end{equation}
where $\mathrm{Dist}(\cdot||\cdot)$ is the function used to measure the policy distance. To further support this assumption, we can demonstrate that the difference in terms of returns is upper bounded by the distance between the corresponding policies:

\begin{theorem}
\label{theo:4.1}
    Given two policies $\pi_{i}$ and $\pi_{j}$, the absolute value of the difference of the expectations of discounted returns $J(\pi_{i})$ and $J(\pi_{j})$ can be upper bounded by a linear mapping of the total variation distance between policies:
    \begin{equation}
    \label{equ:theo4.1}
        |J(\pi_{i}) - J(\pi_{j})| \leq \alpha \max_{s} \mathrm{D_{TV}}(\pi_{i}(\cdot|s)||\pi_{j}(\cdot|s)),
    \end{equation}
    where $\alpha = 2|r|_{\mathrm{max}} / (1-\gamma)^{2}$, and $|r|_{\mathrm{max}}$ is the maximum value of the absolute value of reward in the MDP.
\end{theorem}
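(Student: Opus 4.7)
The plan is to bound the return difference time-step by time-step, passing through the total variation distance between the joint state-action distributions and finally reducing to the per-state TV distance between the policies themselves.

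First, I would write
\begin{equation*}
J(\pi_i) - J(\pi_j) = \sum_{t=0}^{\infty} \gamma^{t}\bigl(\mathbb{E}_{(s_t,a_t)\sim P^{t}_{\pi_i}}[r(s_t,a_t)] - \mathbb{E}_{(s_t,a_t)\sim P^{t}_{\pi_j}}[r(s_t,a_t)]\bigr),
\end{equation*}
where $P^{t}_{\pi}$ denotes the joint law of $(s_t,a_t)$ induced by rolling out $\pi$ from the initial distribution. Using the variational characterization of total variation, each summand is at most $2|r|_{\max}\,\mathrm{D_{TV}}(P^{t}_{\pi_i},P^{t}_{\pi_j})$, so the triangle inequality reduces the whole task to bounding $\mathrm{D_{TV}}(P^{t}_{\pi_i},P^{t}_{\pi_j})$ in terms of $\epsilon := \max_{s} \mathrm{D_{TV}}(\pi_i(\cdot|s)\,\|\,\pi_j(\cdot|s))$.

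The second step, and the main technical step, is to show by induction on $t$ that
\begin{equation*}
\mathrm{D_{TV}}(P^{t}_{\pi_i},P^{t}_{\pi_j}) \leq (t+1)\,\epsilon.
\end{equation*}
I would do this via a coupling argument: build a joint distribution over paired trajectories $(s_t,a_t)$ and $(s'_t,a'_t)$ by using the maximal coupling of $\pi_i(\cdot|s)$ and $\pi_j(\cdot|s)$ at each visited state whenever the two chains have not yet decoupled, and arbitrary dynamics afterwards. Under this coupling, the probability of disagreement at step $t+1$ is at most the probability of disagreement at step $t$ plus $\epsilon$ (the worst-case probability that the maximal coupling fails at the current state), because the shared transition kernel preserves equality of states. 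By the coupling inequality, this upper bound on disagreement probability is an upper bound on the TV distance, giving the linear growth above. I expect this inductive/coupling bookkeeping to be the subtlest part, since one has to be careful that conditional on the chains having agreed so far, the next state is a deterministic function of the shared action through the same transition kernel $P$.

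Finally, combining the two previous steps yields
\begin{equation*}
|J(\pi_i) - J(\pi_j)| \leq 2|r|_{\max}\,\epsilon \sum_{t=0}^{\infty} \gamma^{t}(t+1) = \frac{2|r|_{\max}\,\epsilon}{(1-\gamma)^{2}},
\end{equation*}
using the identity $\sum_{t\geq 0}(t+1)\gamma^{t}=1/(1-\gamma)^{2}$. Substituting the definition of $\epsilon$ and $\alpha=2|r|_{\max}/(1-\gamma)^{2}$ gives exactly \eqref{equ:theo4.1}. An alternative route I would keep in reserve is the Kakade--Langford performance difference lemma together with the bound $\|A^{\pi_j}\|_{\infty}\leq 2|r|_{\max}/(1-\gamma)$ and Hölder's inequality against $\|\pi_i(\cdot|s)-\pi_j(\cdot|s)\|_1$; this reproduces the same rate but with a slightly looser constant, so the coupling-plus-summation approach above is the one I would present.
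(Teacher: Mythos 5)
Your proof is correct, but it takes a genuinely different route from the paper's. The paper argues through value functions: it expands $\Delta V(s)=V^{\pi_i}(s)-V^{\pi_j}(s)$ via the Bellman equation, bounds the one-step reward gap and the policy-mismatch term by $\mathrm{D_{TV}}$ at each state, and then closes the resulting recursion by taking expectations against the discounted state visitation distribution and invoking the identity $d_{\pi}(s)=(1-\gamma)\mu(s)+\gamma\sum_{s',a}d_{\pi}(s')\pi(a|s')p(s|s',a)$ to cancel the recursive term. You instead decompose $J(\pi)$ into its per-timestep marginals $P^t_{\pi}$, bound each summand by $2|r|_{\max}\,\mathrm{D_{TV}}(P^t_{\pi_i},P^t_{\pi_j})$, and control the latter by a maximal-coupling induction showing the disagreement probability grows by at most $\epsilon$ per step; summing $\sum_t (t+1)\gamma^t=1/(1-\gamma)^2$ gives the constant. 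The coupling step is sound as sketched (same initial distribution, shared kernel preserves state equality on the agreement event, and the chains can be evolved independently after decoupling without disturbing the marginals), and your convention $|\mathbb{E}_P f-\mathbb{E}_Q f|\le 2\|f\|_\infty \mathrm{D_{TV}}(P,Q)$ matches the paper's half-$L_1$ definition of $\mathrm{D_{TV}}$. What your route buys is self-containedness (no visitation-distribution lemma needed) and a transparent accounting of where $1/(1-\gamma)^2$ comes from (linear TV drift times geometric discounting); it in fact yields exactly the stated constant $2|r|_{\max}/(1-\gamma)^2$, whereas a careful tally of the paper's $\delta_1+\gamma\delta_2$ terms gives the slightly larger $2(2-\gamma)|r|_{\max}/(1-\gamma)^2$. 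The paper's route is shorter if one already has the visitation identity in hand and is the standard idiom in the CPO/TRPO line of work.
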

\begin{proof}
    Please refer to the Appendix for the proof.
\end{proof}
However, it is impractical to directly adopt the maximum of total variation distance over the state space as the measure of rank difference. To address this challenge, we propose a heuristic approximation by using the expectation of the distance over the states from two trajectories, written as:
\begin{equation}
    |\mathcal{R}_{i} - \mathcal{R}_{j}| \propto \mathrm{E}_{s \sim \pi_{i}, \pi_{j}}[ \mathrm{D_{TV}}(\pi_{i}(\cdot|s)||\pi_{j}(\cdot|s))].
\end{equation}
Intuitively, a straightforward approach to learn the distance-aware reward is to treat the policy distance as a regression target for the reward difference. In this approach, the Mean Squared Error (MSE) loss can be utilized and written as:
\begin{equation}
    \mathcal{L}_{\mathrm{mse}} = (|\mathcal{R}_{i} - \mathcal{R}_{j}|-\beta\mathrm{E}_{s \sim \pi_{i}, \pi_{j}}[ \mathrm{D_{TV}}(\pi_{i}(\cdot|s)||\pi_{j}(\cdot|s)) ])^{2},
\end{equation}
where $\beta$ represents a scaling factor. But in practice, it is challenging to experimentally and theoretically determine the coefficient $\beta$ that would lead to efficient reward learning. By comparison, in the representation space, trajectories derived from policies with similar performance should be clustered together, while those originating from policies with a large performance gap should be pushed apart. Inspired by the contrastive learning, we aim to learn distance-aware representations for reward learning. In traditional contrastive learning, the feature of a data point is pushed away from all negative data points and brought closer to positive data points. However, this approach overlooks the fact that the distance between data sources also reflects the differences in representations for each data point. In the context of DRASRL framework, data source is the corresponding policy from which trajectories are generated. Thus, we propose utilizing the total variation distance as a "soft label" for the distance-aware contrastive learning loss, represented as:
\begin{equation}
\label{equ:d_aware}
\begin{split}
    &\mathcal{L}_{d} = - \sum_{n=0}^{N} \mathrm{Dist}(\bm{X}||\bm{X}_{n})\mathrm{log} \frac{\mathrm{exp}(\bm{X}^{\mathrm{T}} \cdot \bm{X}_{n} / \rho) }{\sum_{l=0}^{N} \mathrm{exp}(\bm{X}^{\mathrm{T}} \cdot \bm{X}_{l} / \rho)} \\
    &\mathrm{Dist}(\bm{X}||\bm{X}_{n}) = 1 - \mathrm{Dist}(\pi(\cdot|s)||\pi_{n}(\cdot|s)),
\end{split}
\end{equation}
where $\bm{X} = [\bm{x}_{1}||\cdots||\bm{x}_{K}]$ is the concatenation of the features of state-action pairs within the same trajectory, and $\rho$ is the temperature hyper-parameter. Unlike the InfoNCE loss~\cite{oord2018representation} that utilizes hard binary labels, our distance-aware loss considers $1-\mathrm{Dist}(\pi_{i}||\pi_{j})$ as a soft label. This soft label is used to guide the pulling together and pushing apart of trajectories in the latent space based on the distance between policies affected by different levels of noise. Since the executed policies are noise-injected policies derived from the same behavioral policy $\pi_{\mathrm{BC}}$, the distances $\mathrm{Dist^{cont}}$ and $\mathrm{Dist^{disc}}$ in both continuous and discrete scenarios can be further simplified as:
\begin{equation}
\label{equ:sim_dist}
\begin{split}
    &\mathrm{Dist^{disc}}(\pi_{i}||\pi_{j}) =\mathrm{D_{TV}}(\pi_{i}||\pi_{j}) =|\epsilon_i -\epsilon_j|(1-\frac{1}{|\mathcal{A}|}) \\
    &\mathrm{Dist^{cont}}(\pi_{i}||\pi_{j}) =\mathrm{D_{TV}}(\pi_{i}||\pi_{j}) = |\epsilon_i -\epsilon_j|,
\end{split}
\end{equation}
where $|\mathcal{A}|$ is the cardinality of discrete action space, and $\epsilon \in (0, 1)$, indicating the level of injected noise, is the probability of executing a random action.
\begin{algorithm}[tb]
    \caption{Distance-rank Aware Sequential
Reward Learning}
    \label{alg:algorithm}
    \begin{algorithmic}[1] 
    \REQUIRE demonstrations $D$, noise schedule $\mathcal{E}$, context length $K$\\
    \ENSURE policy $\pi$ \\
        \STATE Obtain policy $\pi_{BC}$ via behavior cloning on demonstrations
        \FOR{$ \epsilon_i \in \mathcal{E}  $}
        \STATE  Collect a set of trajectories ${S}_i:(\tau_0,\tau_1,...)$  from noise-injected policy $\pi_{BC}(\cdot|\epsilon_i)$
        \STATE Initialize full queue $Q_i$  with sub-trajectories of length $K$ sampling from  ${S}_i$.
        \ENDFOR
        
        \FOR{ each iteration }
        \FOR {$ \epsilon_i \in \mathcal{E}  $}
         \STATE sample a  sub-trajectory of length $K$ from $S_i$
         \STATE Optimize distance-rank aware loss function(Eq.\ref{equ:total_loss}) for reward function $R_\theta$ between new sub-trajectory and data in all queues ($Q_0,Q_1,...$).
        \STATE Remove the oldest sub-trajectory from $Q_i$ and add the new sub-trajectory to $Q_i$
        \ENDFOR
        \ENDFOR
        \STATE Optimize policy $\hat{\pi}$ using reinforcement learning with reward function $R_\theta$.
        \STATE \textbf{return} $\hat{\pi}$ 
    \end{algorithmic}
\end{algorithm}

\subsection{Rank-aware Representations}
Although minimizing $\mathcal{L}_{d}$ allows the representation to encode the relative distance between noise-injected policies, it does not explicitly ensure that the representations incorporate information about the ranking of the corresponding trajectories. To learn rank-aware latent features, we can leverage the pairwise ranking loss~\cite{brown2019drex,luce2012individual}, denoted as:
\begin{equation}
\label{equ:r_aware}
   \mathcal{L}_{r} = -\frac{1}{|\mathcal{P}|} \sum_{(i, j) \in \mathcal{P}} \log \frac{\exp \sum\limits_{(s,a)\in\tau_{j}} \omega^{\mathrm{T}} \phi(\tau_{j})}
   {\exp \sum\limits_{(s,a)\in\tau_{i}} \omega^{\mathrm{T}} \phi(\tau_{i}) + \exp \sum\limits_{(s,a)\in\tau_{j}} \omega^{\mathrm{T}} \phi(\tau_{j})},
\end{equation}
where $\mathcal{P}=\left\{(i, j): \tau_i \prec \tau_j \right\}$, $|\mathcal{P}|$ is the cardinality of the set $\mathcal{P}$, and $\tau_i \prec \tau_j$ denotes $\tau_j$ is ranked higher than $\tau_i$.

\begin{table*}[t]
\caption{Comparison of the performance of our method with SSRR and D-REX in MuJoCo tasks. The results are the average
ground-truth returns over 3 seeds. BC refers to the behavior cloning policy used in D-REX and our method. Our DRASRL outperforms other methods by a wide margin.}
\footnotesize
\label{t_mujoco}
\begin{center}
\begin{tabular}{lccccccccc}
 \toprule[1.2pt]
  & \multicolumn{3}{c}{Demonstration} &DRASRL(Ours)&SSRR& D-REX&BC \\\midrule
 Tasks & \#demo& Average  & best & Average & Average& Average& Average\\    
        \midrule       
        HalfCheetah-v3     &1&1129{\small $\pm$0}&1129 &\textbf{4956{\small $\pm$700}}&1991{\small $\pm$734}&1724{\small $\pm$401}&579{\small $\pm$138}\\
        Hopper-v3   &4&1305{\small $\pm$14}&1319&\textbf{2073{\small $\pm$354}}&1477{\small $\pm$584}&1316{\small $\pm$82}&1242{\small $\pm$56}\\
        \bottomrule[1.2pt]
\end{tabular}
\end{center}
\end{table*}

\begin{figure*}[t]
\centering
    \begin{minipage}{0.30\textwidth}
    \includegraphics[width=\linewidth]{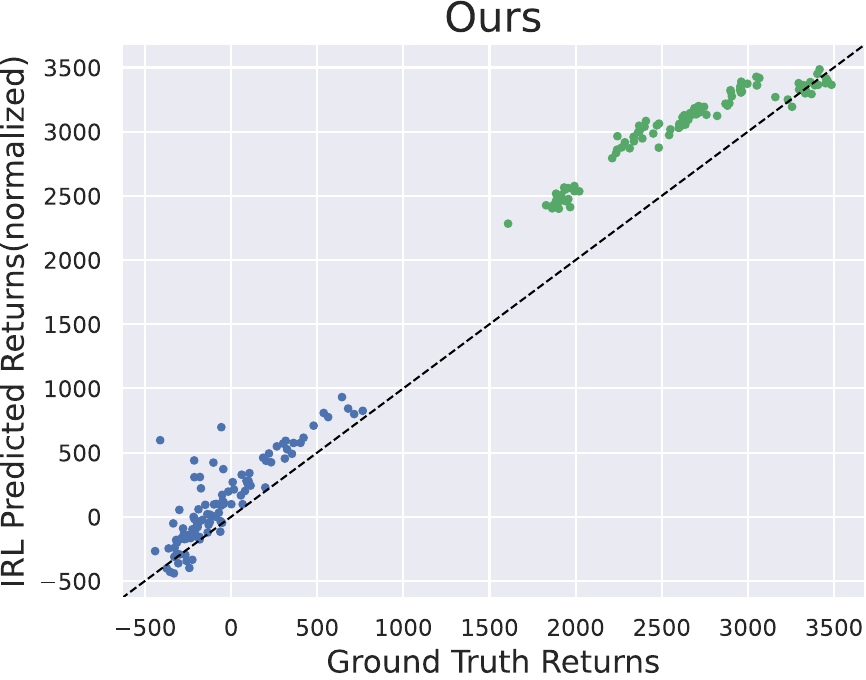}
    \end{minipage}
    \quad
    \begin{minipage}{0.30\textwidth}
    \includegraphics[width=\linewidth]{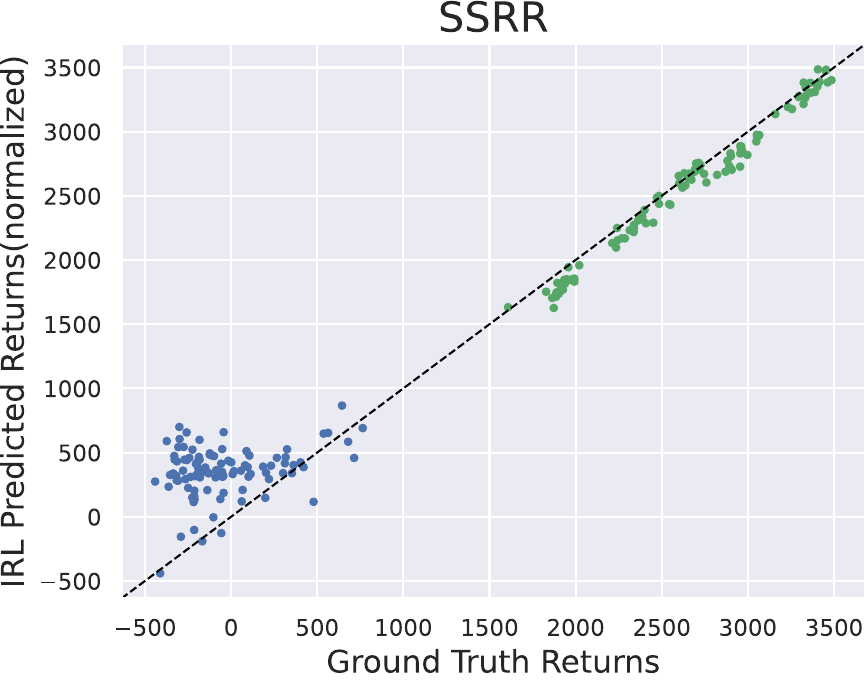}
    \end{minipage}
    \quad
    \begin{minipage}{0.30\textwidth}
    \includegraphics[width=\linewidth]{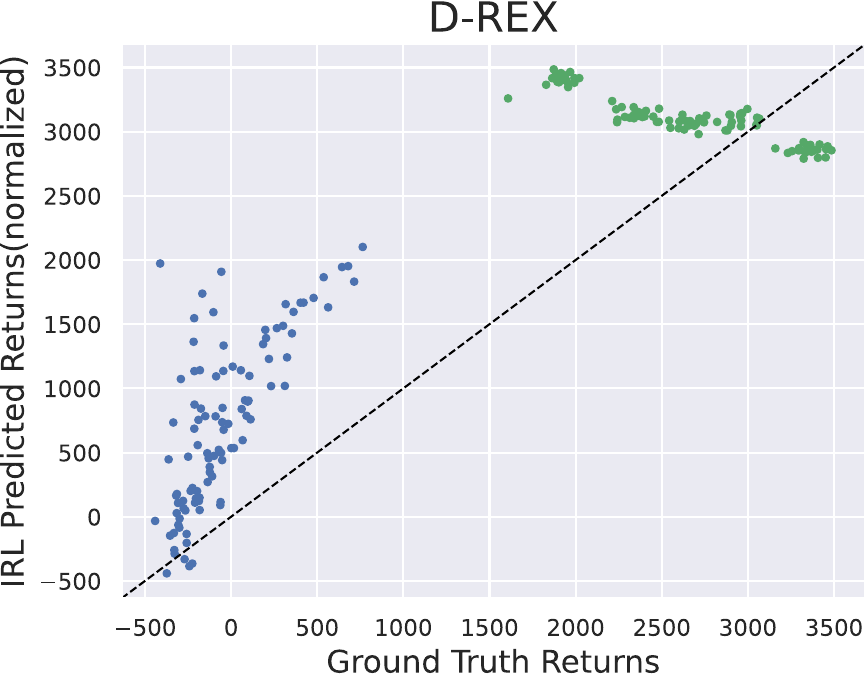}
    \end{minipage}
    \caption{Reward function correlation with ground-truth reward in HalfCheetah-v3. Blue dots represent reward training data generated via behavior cloning policy with different noise injections. Green dots represent the demonstrations not seen during training. Both predicted returns are normalized to the same range as ground-truth returns.
}
    \label{fig: corr_halfcheetah}

\end{figure*}
\subsection{Overall Pipeline}
Our method incorporates both the distance-aware loss and the rank-aware loss. By considering both aspects, we can further reduce the ambiguity in reward learning, leading to improved downstream optimization of the policy. The overall learning objective for the reward function can be written as:
\begin{equation}
\label{equ:total_loss}
    \mathcal{L}(\theta_{\phi}, w) = \mathcal{L}_{d}(\theta_{\phi}) + \lambda \mathcal{L}_{r}(\theta_{\phi}, w),
\end{equation}
where $\lambda$ is a hyper-parameter to balance two loss terms.

To train the reward model using the objective in Equation~\ref{equ:total_loss}, we employ the dictionary look-up technique proposed in~\cite{he2020momentum}. Initially, behavior cloning policies injected with different levels of noise are stored to automatically generate trajectories. Each policy corresponds to a queue, which will be used to store the sub-trajectories generated by that policy. In each training iteration, we randomly sample a sub-trajectory from each queue, indicating a specific level of noise. These sub-trajectories are used to compute the distance-rank aware loss, as described in Equation~\ref{equ:d_aware}. And the sampled sub-trajectories are ordered based on their ranks to compute the rank-aware loss in Equation~\ref{equ:r_aware}. After computing the loss, the sampled sub-trajectories are removed from their respective queues. New sub-trajectories, generated by executing policies with the same noise levels, are then added to the corresponding queues. The overall pipeline is depicted in Figure~\ref{fig:drasrl}. The pseudocode of DRASRL framework is illustrated in Algorithm~\ref{alg:algorithm}.

\section{Experiments}

Our experiments are aimed to study the following problem:
\begin{itemize}
\item \textbf{Performance}: Can our method learn a more accurate reward model compared to existing approaches and improve the performance of RL policies? (see Table~\ref{t_mujoco}, \ref{t_coeff}, \ref{t_atari})
\end{itemize}
\begin{itemize}
 \item
\textbf{Influence of transformer architecture}: Does the transformer architecture contribute to the learning of a more effective reward model? (see Table~\ref{t_ablation_cl})
\end{itemize}
\begin{itemize}
 \item
\textbf{Influence of context length}: Intuitively, context length is an important hyper-parameter for our DRASRL. We set different context lengths and compare their performance. (see Table~\ref{t_length})
\end{itemize}
\begin{itemize}
 \item
\textbf{Influence of contrastive learning}: Does contrastive learning help our method to learn better representation? (see Table~\ref{t_ablation_cl} and Figure~\ref{fig:tsne})
\end{itemize}

We evaluate our method on a range of tasks including MuJoCo robot locomotion tasks and Atari video games. We take the previous learning from sub-optimal demonstration methods (D-REX, SSRR) as our baseline for comparison.
\begin{table*}[t]
\caption{Comparison of the performance of our method with D-REX and the demonstrator’s performance. The results are the average ground-truth returns over 3 seeds. Bold denotes the best performance between IRL methods.}
\label{t_atari}
\begin{center}
\begin{tabular}{lccccc}
\toprule[1.2pt]
  & \multicolumn{2}{c}{Demonstrations} &DRASRL(Ours) & D-REX & BC \\\midrule
 Tasks & Average & Best & Average & Average&Average \\
\midrule
        Beam Rider     & 744{\small $\pm$173} &1092        & \textbf{6157{\small $\pm$792}}     & 4675{\small $\pm$1678}& 502{\small $\pm$205}  \\
        Breakout  &   68{\small $\pm$57} &230       & \textbf{262{\small $\pm$149}}  & 234{\small $\pm$157}&  5{\small $\pm$6} \\
          Pong        & 15.3{\small $\pm$3.6} &20         & \textbf{17.8{\small $\pm$5.9}}  &    1.3{\small $\pm$14.0} & 7.9{\small $\pm$9.5} \\
          Q*bert        & 788{\small $\pm$144} &875         & \textbf{125903{\small $\pm$47856}}  &    21934{\small $\pm$8230} & 728{\small $\pm$173}\\
          Seaquest        & 550{\small $\pm$149} &780        & \textbf{995{\small $\pm$97}} &    773{\small $\pm$26} &408{\small $\pm$125}   \\
          Space Invaders        & 478.5{\small $\pm$267} &880         & \textbf{1641{\small $\pm$664}} &    806{\small $\pm$189} &322{\small $\pm$161}  \\
\bottomrule[1.2pt]
\end{tabular}
\end{center}
\end{table*}
\subsection{MuJoCo Tasks}

\textbf{Experimental setup.} 
 We evaluate our method on two MuJoCo simulated robot locomotion tasks:  Hopper-v3 and HalfCheetah-v3. 
The agent is expected to maintain balance and move forward.
To generate sub-optimal demonstrations, Proximal Policy Optimization(PPO)~\cite{schulman2017proximal} agents were partially trained on the two tasks with ground truth reward.
For our method, state sequence $(s_1,s_2,...,s_K)$ is fed to the reward model. The length of the state sequence is 5, $\rho$ in distance-aware loss (Eq.\ref{equ:d_aware}) is 1.0 and $\lambda$ in total loss function (Eq.\ref{equ:total_loss}) is 0.1. We use 20 different noise levels equal-spaced between $[0,1)$, and generate 5 trajectories for each noise level.
When computing rank-aware loss term, we discard pairs whose noise difference is smaller than 0.3 to stabilize the training process. 
The reward model is trained by an Adam optimizer with a learning rate of 0.0001, weight decay of 0.01, and batch size of 64 for 150 iterations. 

When RL policies are trained with a reward model, the predicted rewards are normalized by a running mean and standard deviation. Additionally, a control penalty is added to the normalized reward. This control penalty represents a safety prior over reward functions which is used in OpenAI Gym\cite{brockman2016openai}. We set the same control coefficient as the default value in Gym. 

\textbf{Learned policy performance.} We trained a PPO agent with reward models for 10M environment steps in 3 different seeds and report their average performance (ground truth returns) in Table \ref{t_mujoco}. It compares the scores of RL policies that were trained with reward models. The results demonstrate that our DRASRL outperforms other methods when combined with the same RL algorithm by a wide margin.

\begin{table}
  \caption{Learned Reward Correlation Coefficients with Ground-Truth Reward. Our DRASRL learned a more accurate reward model with higher correlation coefficients.}
\label{t_coeff}
\begin{center}
\begin{tabular}{lcccccc}
\toprule[1.2pt]
  &  \multicolumn{1}{c}{DRASRL(Ours)} & \multicolumn{1}{c}{D-REX} &\multicolumn{1}{c}{SSRR}\\\midrule
        HalfCheetah-v3  & \textbf{0.989}    & 0.900& 0.986 \\
        Hopper-v3 & \textbf{0.884} &0.412& 0.822 \\
\bottomrule[1.2pt]
\end{tabular}
\end{center}
\end{table}

\textbf{Correlation with ground-truth reward.} To investigate how well our method recovers the reward function, we compare predicted returns and ground truth returns of a number of trajectories. Following~\cite{chen2021learning}, we use the reward training dataset and 100 episodes of unseen trajectories that most of which have better performance than the best trajectories of training data to compute correlation coefficients. We present the correlation coefficients between IRL predicted returns and ground-truth returns for our method and baseline methods in Table \ref{t_coeff} and Figure~\ref{fig: corr_halfcheetah}.
The findings demonstrate that our proposed method not only accurately recovers the reward function within the scope of the training data but also effectively extrapolates beyond it. 

\subsection{Atari Tasks}


\textbf{Experimental setup.} Atari environments have high-dimensional visual input, normally $4\times84\times84$ dimension, which is much higher than MuJoCo continuous control tasks. Such high-dimensional states are a challenge for IRL methods. Many previous methods, (\textit{e.g.} AIRL), are not able to get good performance in Atari~\cite{tucker2018inverse}. For SSRR, since the authors did not report scores on atari in the paper and we cannot reproduce meaningful results (probably because it is based on AIRL), we only compare our method with D-REX. We used a convolutional encoder to extract the visual features from observation before feeding them to the transformer.
We used specific values for the hyper-parameters $\lambda$ and $\rho$ for different Atari games. For Beam Rider and Space Invaders, the values of $\lambda$ and $\rho$ were set to 0.1 and 0.1, respectively. For Breakout, Q*bert, and Pong, the values of $\lambda$ and $\rho$ were set to 0.1 and 1.0, respectively. For Seaquest, the values of $\lambda$ and $\rho$ were set to 0.01 and 1.0, respectively.

To generate sub-optimal demonstrations, a policy is trained insufficiently using the PPO algorithm with the ground truth reward, resulting in the generation of 10 trajectories. Then, the behavior cloning policy learns from these demonstrations. Furthermore, the noise schedule $\mathcal{E} =[1.0,0.75,0.5,0.25,0.02]$ is used to generate 20 trajectories for each level of noise. For all Atari environments, state-action sequence $(s_1,a_1,s_2,a_2,...,s_K,a_K)$ is fed to the reward model, and the context length $K$ is set to 20.

In Atari games, the score can significantly influence reward learning, often leading the reward model to focus primarily on the score. To mitigate the impact of this score information, we mask all score-related pixels during both reward learning and prediction.



\textbf{Learned policy performance.} As shown in Table \ref{t_atari}, we evaluate our method on six Atari games and measure the performance by the RL policy learned with the reward model.
To avoid reward scaling issues, we feed the predicted rewards to a sigmoid function before passing it to the RL algorithm. We train a PPO agent with our reward model for 50M frames in 3 different seeds and report their average performance (ground truth returns) in Table \ref{t_atari}. Our DRASRL outperforms the baseline method in all six Atari environments. 
In the Q*bert environment, both methods find a known loophole in the game which leads to nearly infinite points (rewards), but our method gets higher points. In other environments, policies trained with our reward model outperform D-REX's policies as well.

\subsection{Ablation Study}
\textbf{Influence of transformer architecture.} By comparing DRASRL without contrastive learning to D-REX, we are able to examine the impact of the transformer architecture. As illustrated in Table~\ref{t_ablation_cl}, employing the transformer architecture leads to a significant improvement in the trained policy's performance.

\textbf{Influence of context length.}
Table~\ref{t_length} shows the performance of the policy trained on HalfCheetah-v3 and Space Invaders using our DRASRL under different context lengths. The main trend observed is that as the context length increases, the performance improves. However, in the case of HalfCheetah, we observe a deterioration in performance when the context length is increased from 5 to 10. We attribute this decrease in performance to overfitting.
\begin{table}[t]
    \caption{The performance of policies trained by DRASRL with different context lengths.}
    \centering
    \begin{tabular}{lccc}
    \toprule[1.2pt]
    Tasks & $K=1$  & $K=5$ & $K=10$ \\
    \midrule
    HalfCheetah-v3  & 3923{\small $\pm$1054}  &4956{\small $\pm$700}  & 2884 {\small $\pm$657} \\
    \midrule
    Space Invaders  &734{\small $\pm$270} & 1079{\small $\pm$388} & 1282{\small $\pm$674} \\
    \bottomrule[1.2pt]
    \end{tabular}
    \label{t_length}
\end{table}

\textbf{Influence of contrastive learning.}
We also analyzed the effect of introducing contrastive learning. As shown in Table~\ref{t_ablation_cl}, Distance-aware loss enabled us to achieve performance improvements in several Atari games and MuJoCo locomotion tasks, especially in Q*bert, where the score is increased to approximately three times over the original score. More analysis of feature space can be seen in the next section.

\begin{table}
  \caption{A comparison of performance outcomes, illustrating the effects of employing or not employing contrastive distance-aware loss, and the use or non-use of the transformer architecture. Bold denotes the best performance.}
\label{t_ablation_cl}
\setlength\tabcolsep{0.9pt}
\begin{center}
\begin{tabular}{lcccc}
\toprule[1.2pt]
  &DRASRL & DRASRL(w/o CL) &D-REX\\\midrule
        Beam Rider             & \textbf{6157{\small $\pm$792}}     &   5321{\small $\pm$878} &4678{\small $\pm$1678}\\
        Breakout           & \textbf{262{\small $\pm$149}} & 261{\small $\pm$200} &234{\small $\pm$157}  \\
          Pong                & 17.8{\small $\pm$5.9}  &  \textbf{19.8{\small $\pm$2.0}}  &1.3{\small $\pm$14.0}  \\
          Q*bert                 & \textbf{125903{\small $\pm$47856}}  &  45284{\small $\pm$5401} &21934{\small $\pm$8230}   \\
          Seaquest               & \textbf{995{\small $\pm$97}} &   818{\small $\pm$12}  &773{\small $\pm$26}  \\
          Space Invaders           & \textbf{1641{\small $\pm$664}} &  1388{\small $\pm$392}&806{\small $\pm$189}  \\
          \midrule
          HalfCheetah &\textbf{4960{\small $\pm$700}} &1898{\small $\pm$64}&1724{\small $\pm$401}\\
          Hopper &\textbf{2073{\small $\pm$354}}&1421{\small $\pm$36}&1316{\small $\pm$82}\\
\bottomrule[1.2pt]
\end{tabular}
\end{center}
\end{table}

\begin{figure}[h]
\centering
\includegraphics[width=0.8\textwidth]{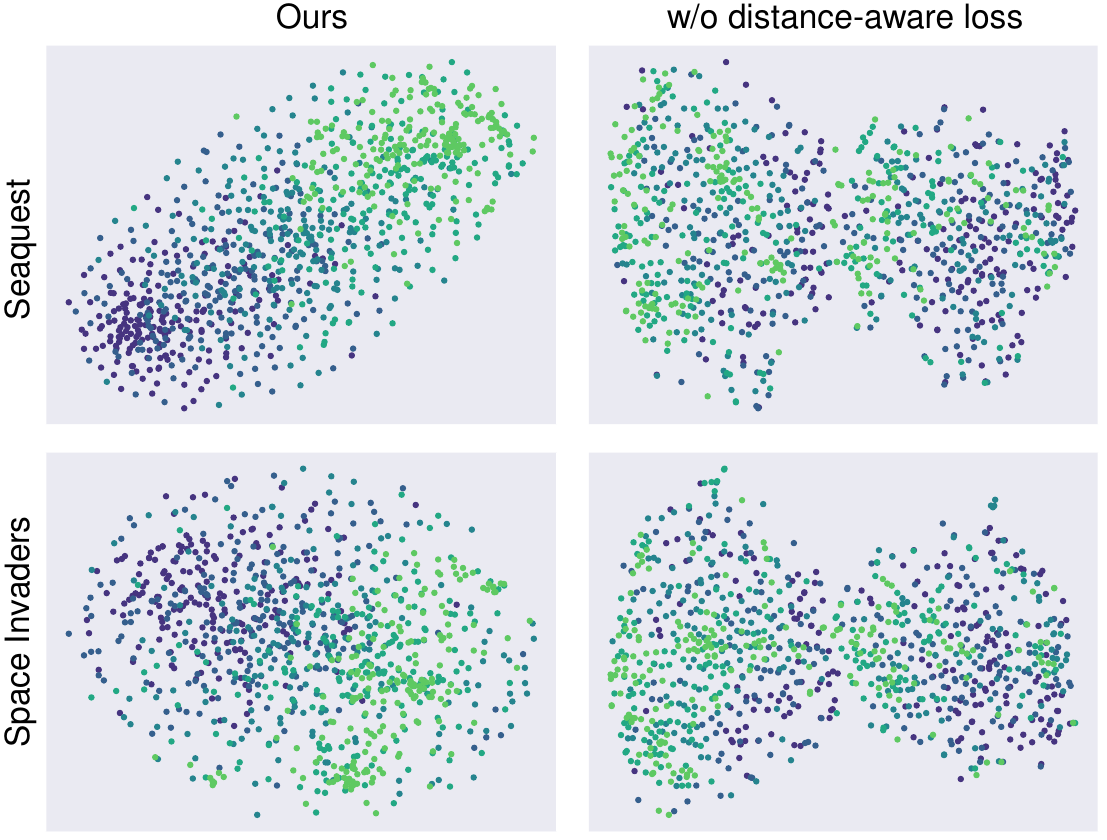}
\caption{Feature visualization using t-SNE with or without distance-aware loss. Each sub-trajectory is visualized as a point. Brighter colors represent less noise injection. Profiting from distance-aware loss, sub-trajectories are orderly arranged according to their performance leading to better generalization.}
\label{fig:tsne}
\end{figure}
\subsection{Visualization Analysis}
To see how contrastive learning influences the structure of representation space,
We randomly select 200 sub-trajectories for each noise level and visualize the corresponding features extracted by both DRASRL and the version of DRASRL without contrastive learning. By leveraging t-SNE\cite{van2008visualizing}, we visualize data points in the representation space as shown in Figure \ref{fig:tsne}. Each point represents a sub-trajectory, and different colors depict different noise levels. A brighter color means less noise injection. We observe that optimizing our distance-aware loss function leads to more separable features. Specifically, trajectories with similar performances are pulled together in feature space, while trajectories with totally different performances are pushed apart. 
On the contrary, trajectories are randomly distributed in the feature space without distance-aware loss.
Structured feature space where trajectories are orderly arranged according to their performance or levels of injected noise leads to better generalization.

\section{Conclusion}

In this paper, we develop a novel inverse reinforcement learning framework that learns reward functions at the sequence level. In addition, we develop a novel distance-rank aware loss leading to a structured feature space with better generalization.
By combining them, our method outperforms previous methods in a range of tasks.
To the best of our knowledge, we are the first to introduce transformer and contrastive learning to the IRL.

\textbf{Limitation and future work.}
Given that we trained separate reward models for different tasks, a limitation of our method is that we might not be fully leveraging the potential of the transformer architecture for transfer between tasks. We believe that a promising future direction could involve training larger models to solve multiple tasks that share overlapping domains.

\bibliographystyle{apalike} 
\bibliography{neurips_2023}

\setcounter{theorem}{0}
\setcounter{proof}{0}
\setcounter{equation}{0}
\clearpage
\appendix
\begin{appendices}
\setcounter{section}{0}
\setcounter{equation}{0}
\renewcommand\thesection{\Alph{section}} 
{\centering \section*{\Large Appendix}}
\section{Proof of theorem 4.1}

\begin{lemma}\label{lemma1} [\cite{pmlr-v70-achiam17a}, Appendix.10.1.1]
Given a policy $\pi(\cdot|s)$, a transition model $p(\cdot|s,a)$, and a initial state distribution $\mu(s)$, the discounted state visitation distribution $d_{\pi}(s)$ under policy $\pi$ can be described as:
\begin{equation}
    d_{\pi}(s) = (1-\gamma)\mu(s) + \gamma\sum_{s^{\prime},a} d_{\pi}(s^{\prime})\pi(a|s^{\prime})p(s|s^{\prime},a)
\end{equation}
\end{lemma}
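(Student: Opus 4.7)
My plan is to prove the identity by unrolling the definition of $d_{\pi}$ as a discounted sum of per-step state occupancy probabilities and then recognizing $d_\pi(s')$ inside the recursion. Specifically, I would start by recalling the definition
\begin{equation}
d_\pi(s) = (1-\gamma)\sum_{t=0}^{\infty} \gamma^t \Pr(s_t = s \mid \pi, \mu),
\end{equation}
where the factor $(1-\gamma)$ is precisely what makes $d_\pi$ a normalized distribution and is what produces the $(1-\gamma)\mu(s)$ term once we peel off $t=0$ (with $\Pr(s_0=s)=\mu(s)$).

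Next, I would write the one-step Markov propagation
\begin{equation}
\Pr(s_{t+1} = s \mid \pi, \mu) = \sum_{s',a} \Pr(s_t = s' \mid \pi, \mu)\,\pi(a\mid s')\, p(s \mid s', a),
\end{equation}
so that after isolating the $t=0$ term I obtain
\begin{equation}
d_\pi(s) = (1-\gamma)\mu(s) + (1-\gamma)\sum_{t=0}^{\infty}\gamma^{t+1} \Pr(s_{t+1}=s\mid\pi,\mu).
\end{equation}
Substituting the one-step propagation, factoring out $\gamma$, and swapping the order of the finite sums over $(s',a)$ with the infinite geometric sum over $t$, the inner series becomes $(1-\gamma)\sum_{t\ge 0}\gamma^t \Pr(s_t=s'\mid \pi,\mu)$, which is exactly $d_\pi(s')$. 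This yields the claimed fixed-point equation.

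The only genuine care needed is justifying the interchange of summations; since all terms are nonnegative and the geometric series converges absolutely for $\gamma\in[0,1)$, Tonelli's theorem applies and the swap is immediate, so I view this as the main (mild) technical checkpoint rather than a real obstacle. I would also briefly note the normalization convention at the top of the proof, because without the $(1-\gamma)$ prefactor the recursion would instead read $d_\pi(s)=\mu(s)+\gamma\sum_{s',a}d_\pi(s')\pi(a\mid s')p(s\mid s',a)$, and matching the statement in the excerpt requires the normalized form.
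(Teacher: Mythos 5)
The paper does not prove this lemma itself; it imports it verbatim from the cited reference (Achiam et al., Appendix 10.1.1), so there is no in-paper proof to compare against. Your derivation --- unrolling the normalized discounted occupancy $d_\pi(s)=(1-\gamma)\sum_{t\ge 0}\gamma^t\Pr(s_t=s)$, peeling off the $t=0$ term to get $(1-\gamma)\mu(s)$, applying the one-step Markov propagation, and justifying the interchange of sums by nonnegativity (Tonelli) --- is correct and is exactly the standard argument used in that reference.
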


\begin{theorem}
    Given two policies $\pi_{i}$ and $\pi_{j}$, the absolute value of the difference of their returns $J(\pi_{i})$ and $J(\pi_{j})$ can be upper bounded by a linear mapping of the total variation distance between policies:
    \begin{equation}
    \label{eq20}
        |J(\pi_{i}) - J(\pi_{j})| \leq \alpha \max_{s} \mathrm{D_{TV}}(\pi_{i}(\cdot|s)||\pi_{j}(\cdot|s)),
    \end{equation}
    where $\alpha = 2|r|_{\mathrm{max}} / (1-\gamma)^{2}$.
\end{theorem}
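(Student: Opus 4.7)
The plan is to express the return gap in terms of the state-action visitation distributions, bound it by the $L_1$ distance between those distributions times the maximum reward, and then use Lemma~\ref{lemma1} to turn that $L_1$ distance into the desired total-variation quantity. Concretely, using the discounted state visitation distribution $d_\pi$ from Lemma~\ref{lemma1}, I would start from the identity
\begin{equation*}
J(\pi) \;=\; \frac{1}{1-\gamma}\sum_{s,a} d_{\pi}(s)\,\pi(a|s)\,r(s,a),
\end{equation*}
so that if I define the joint visitation $\rho_{\pi}(s,a) := d_{\pi}(s)\pi(a|s)$, I immediately obtain
\begin{equation*}
|J(\pi_i) - J(\pi_j)| \;\leq\; \frac{|r|_{\max}}{1-\gamma}\,\bigl\|\rho_{\pi_i}-\rho_{\pi_j}\bigr\|_{1}.
\end{equation*}

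The core of the argument is then to bound $\|\rho_{\pi_i}-\rho_{\pi_j}\|_{1}$. First I would add and subtract $d_{\pi_i}(s)\pi_j(a|s)$ inside the absolute value and apply the triangle inequality, which gives
\begin{equation*}
\bigl\|\rho_{\pi_i}-\rho_{\pi_j}\bigr\|_{1} \;\leq\; 2\,\mathbb{E}_{s\sim d_{\pi_i}}\!\bigl[\mathrm{D_{TV}}(\pi_i(\cdot|s)\|\pi_j(\cdot|s))\bigr] \;+\; \bigl\|d_{\pi_i}-d_{\pi_j}\bigr\|_{1}.
\end{equation*}
Next, I would subtract the two recursions supplied by Lemma~\ref{lemma1}: the initial-distribution term $(1-\gamma)\mu(s)$ cancels, and after summing in $s$ and using that $p(\cdot|s',a)$ is a probability kernel (so $L_1$ contractions carry through), I obtain the contraction-type inequality
\begin{equation*}
\bigl\|d_{\pi_i}-d_{\pi_j}\bigr\|_{1} \;\leq\; \gamma\,\bigl\|\rho_{\pi_i}-\rho_{\pi_j}\bigr\|_{1}.
\end{equation*}

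Plugging this back into the previous display and solving for $\|\rho_{\pi_i}-\rho_{\pi_j}\|_1$ yields $\|\rho_{\pi_i}-\rho_{\pi_j}\|_1 \leq \frac{2}{1-\gamma}\mathbb{E}_{s\sim d_{\pi_i}}[\mathrm{D_{TV}}(\pi_i(\cdot|s)\|\pi_j(\cdot|s))]$, which I then upper-bound by $\frac{2}{1-\gamma}\max_{s}\mathrm{D_{TV}}(\pi_i(\cdot|s)\|\pi_j(\cdot|s))$. Combining with the first display yields exactly Eq.~\ref{eq20} with $\alpha = 2|r|_{\max}/(1-\gamma)^2$.

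The main obstacle I anticipate is the step where Lemma~\ref{lemma1} is subtracted and repackaged as a contraction on $\|d_{\pi_i}-d_{\pi_j}\|_1$: one has to be careful that the mixed term $d_{\pi_i}(s')\pi_i(a|s') - d_{\pi_j}(s')\pi_j(a|s')$ on the right-hand side is exactly the joint visitation difference $\rho_{\pi_i}-\rho_{\pi_j}$ and not something else, and that the transition kernel does not inflate the norm. Everything else is bookkeeping: splitting the joint via a common factor, invoking the triangle inequality, and recognising $\tfrac12\sum_a|\pi_i(a|s)-\pi_j(a|s)| = \mathrm{D_{TV}}(\pi_i(\cdot|s)\|\pi_j(\cdot|s))$. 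The pessimistic replacement of the expectation under $d_{\pi_i}$ by the supremum over states is what turns the bound into the clean ``$\max_s$'' form stated in the theorem.
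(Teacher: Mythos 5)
Your proof is correct and reaches the stated constant $\alpha = 2|r|_{\mathrm{max}}/(1-\gamma)^{2}$, but by a genuinely different route than the paper. The paper works in value space: it sets $\Delta V(s)=V^{\pi_{i}}(s)-V^{\pi_{j}}(s)$, expands it through the Bellman equation into a one-step reward-difference term and a next-state term, bounds each by a multiple of $\mathrm{D_{TV}}(\pi_{i}(\cdot|s)\|\pi_{j}(\cdot|s))$, and then takes an expectation under $d_{\pi_{i}}$ so that the recursion in Lemma~\ref{lemma1} cancels the self-referential $\gamma\,\mathrm{E}_{a\sim\pi_{i}}\mathrm{E}_{s'}\Delta V(s')$ term and lets it solve for $(1-\gamma)\mathrm{E}_{s\sim\mu}\Delta V(s)$. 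You work instead in occupancy-measure space, writing $J(\pi)=\frac{1}{1-\gamma}\sum_{s,a}\rho_{\pi}(s,a)r(s,a)$, splitting $\rho_{\pi_{i}}-\rho_{\pi_{j}}$ into a policy-difference part and a $d_{\pi_{i}}-d_{\pi_{j}}$ part, and using the same Lemma~\ref{lemma1} recursion as an $L_{1}$ contraction, $\|d_{\pi_{i}}-d_{\pi_{j}}\|_{1}\leq\gamma\|\rho_{\pi_{i}}-\rho_{\pi_{j}}\|_{1}$, to close the loop; all of these steps check out, including the delicate one you flagged (the mixed term in the subtracted recursions is exactly $\rho_{\pi_{i}}-\rho_{\pi_{j}}$, and $\sum_{s}p(s|s',a)=1$ prevents any norm inflation). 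The two arguments are essentially dual to one another, both hinging on Lemma~\ref{lemma1} to absorb a $\gamma$-weighted self-reference. Your version buys two things: the absolute value is handled for free because everything sits inside $L_{1}$ norms (the paper bounds $\Delta V$ only from above and implicitly appeals to symmetry in $i,j$ for the other direction), and your intermediate bound $\frac{2|r|_{\mathrm{max}}}{(1-\gamma)^{2}}\,\mathrm{E}_{s\sim d_{\pi_{i}}}[\mathrm{D_{TV}}(\pi_{i}(\cdot|s)\|\pi_{j}(\cdot|s))]$ is already in the expectation-over-states form that the paper subsequently adopts only as a heuristic surrogate for the $\max_{s}$ bound, so your derivation actually justifies that surrogate rigorously (with the expectation taken under $d_{\pi_{i}}$). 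What the paper's route buys in exchange is closer alignment with the performance-difference machinery of the constrained-policy-optimization literature it cites.
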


\begin{proof}
    The absolute value of the difference of returns derived from $\pi_{i}$ and $\pi_{j}$ can be represented as:
    \begin{equation}
    \begin{split}
        |J(\pi_{i}) - J(\pi_{j})| 
        &= |\mathrm{E}_{s \sim \mu(s)} [V^{\pi_{i}}(s) - V^{\pi_{j}}(s)] | \\
        &\leq \mathrm{E}_{s \sim \mu(s)} |V^{\pi_{i}}(s) - V^{\pi_{j}}(s)|,
    \end{split}
    \end{equation} 
    where $\mu(s)$ is the initial state distribution.

    Here, we set $\Delta V(s) = V^{\pi_{i}}(s) - V^{\pi_{j}}(s)$, and it can be transformed as:
    \begin{equation}
    \label{eq22}
    \begin{split}
        \Delta V(s) &= \mathrm{E}_{a \sim \pi_{i}} [r(s, a) + \gamma \mathrm{E}_{s^{\prime}} (V^{\pi_{i}}(s^{\prime})) ] \\
        &\quad - \mathrm{E}_{a \sim \pi_{j}} [r(s, a) + \gamma \mathrm{E}_{s^{\prime}} (V^{\pi_{j}}(s^{\prime})) ] \\
        &=\underbrace{\mathrm{E}_{a \sim \pi_{i}} (r(s, a)) - \mathrm{E}_{a \sim \pi_{j}} (r(s, a))}_{\textcircled{1}} \\
        &\quad +\gamma\underbrace{\mathrm{E}_{a \sim \pi_{i}}\mathrm{E}_{s^{\prime}} (V^{\pi_{i}}(s^{\prime})) -\gamma\mathrm{E}_{a \sim \pi_{j}}\mathrm{E}_{s^{\prime}} (V^{\pi_{j}}(s^{\prime}))}_{\textcircled{2}}
    \end{split}
    \end{equation}
    The term $\textcircled{1}$ can be upper bounded by:
    \begin{equation}
    \label{eq23}
    \begin{split}
        &\mathrm{E}_{a \sim \pi_{i}} (r(s, a)) - \mathrm{E}_{a \sim \pi_{j}} (r(s, a)) \\
        = &\sum_{a} [\pi_{i}(a|s) - \pi_{j}(a|s)] r(s,a) \\
        \leq &\sum_{a} |\pi_{i}(a|s) - \pi_{j}(a|s)| |r(s,a)| \\
        \leq &2|r|_{\mathrm{max}} \frac{1}{2}\sum_{a} |\pi_{i}(a|s) - \pi_{j}(a|s)| \\
        = &2|r|_{\mathrm{max}} \mathrm{D_{TV}}(\pi_{i}(\cdot|s) || \pi_{j}(\cdot|s)) = \delta_{1}(s)
    \end{split}
    \end{equation}
    The upper bound of term $\textcircled{2}$ can be firstly written as:
    \begin{equation}
    \label{eq24}
    \begin{split}
        &\mathrm{E}_{a \sim \pi_{i}}\mathrm{E}_{s^{\prime}} (V^{\pi_{i}}(s^{\prime})) - \mathrm{E}_{a \sim \pi_{j}}\mathrm{E}_{s^{\prime}} (V^{\pi_{j}}(s^{\prime})) \\
        =&\mathrm{E}_{a \sim \pi_{i}}\mathrm{E}_{s^{\prime}} (V^{\pi_{i}}(s^{\prime})) - \mathrm{E}_{a \sim \pi_{i}}\mathrm{E}_{s^{\prime}} (V^{\pi_{j}}(s^{\prime})) \\
        &+\mathrm{E}_{a \sim \pi_{i}}\mathrm{E}_{s^{\prime}} (V^{\pi_{j}}(s^{\prime})) - \mathrm{E}_{a \sim \pi_{j}}\mathrm{E}_{s^{\prime}} (V^{\pi_{j}}(s^{\prime})) \\
        =&\mathrm{E}_{a \sim \pi_{i}}\mathrm{E}_{s^{\prime}} \Delta V(s^{\prime}) + \sum_{a}(\pi_{i}(a|s) - \pi_{j}(a|s))\mathrm{E}_{s^{\prime}} V^{\pi_{j}}(s^{\prime})\\
    \end{split}
    \end{equation}
    We then derive an upper bound of the second term in Equation~\ref{eq24} as:
    \begin{equation}
    \label{eq25}
    \begin{split}
        &\sum_{a}(\pi_{i}(a|s) - \pi_{j}(a|s))\mathrm{E}_{s^{\prime}} V^{\pi_{j}}(s^{\prime}) \\
        \leq &\sum_{a}(\pi_{i}(a|s) - \pi_{j}(a|s)) \max_{s^{\prime}} V^{\pi_{j}} (s^{\prime}) \\
        \leq &\frac{2|r|_{\mathrm{max}}}{1-\gamma} \frac{1}{2}\sum_{a} |\pi_{i}(a|s) - \pi_{j}(a|s)| \\
        = &\frac{2|r|_{\mathrm{max}}}{1-\gamma} \mathrm{D_{TV}}(\pi_{i}(\cdot|s)||\pi_{j}(\cdot|s)) = \delta_{2} (s)
    \end{split}
    \end{equation}
    Next, Equation~\ref{eq23}~\ref{eq24} and~\ref{eq25} can be plugged into Equation~\ref{eq22}, written as:
    \begin{equation}
        \Delta(s) \leq \delta_{1}(s) + \gamma\delta_{2}(s) + \gamma\mathrm{E}_{a \sim \pi_{i}}\mathrm{E}_{s^{\prime}} \Delta V(s^{\prime})
    \end{equation}
    Thus, the expectation of $\Delta(s)$ can be further upper bounded by:
    \begin{equation}
    \label{eq27}
    \begin{split}
        &\mathrm{E}_{s \sim d_{\pi_{i}}} \Delta(s) \\
        \leq &\mathrm{E}_{s \sim d_{\pi_{i}}} [\delta_{1}(s) + \gamma\delta_{2}(s) + \gamma\mathrm{E}_{a \sim \pi_{i}}\mathrm{E}_{s^{\prime}} \Delta V(s^{\prime})] \\
        \leq &\mathrm{max}_{s}[\delta_{1}(s) + \gamma\delta_{2}(s)] + \gamma\mathrm{E}_{s \sim d_{\pi_{i}}}\mathrm{E}_{a \sim \pi_{i}} \mathrm{E}_{s^{\prime}} \Delta V(s^{\prime}),
    \end{split}
    \end{equation}
    where $d_{\pi}(s)$ is the discounted state visitation distribution under policy $\pi$. 
    
    According to Lemma~\ref{lemma1}, Inequality~\ref{eq27} can be simplified as:
    \begin{equation}
    \begin{split}
        \mathrm{E}_{s \sim d_{\pi_{i}}} \Delta(s) &\leq \mathrm{max}_{s}[\delta_{1}(s) + \gamma\delta_{2}(s)] \\
        &\quad + \sum_{s^{\prime}}[d_{\pi_{i}}(s^{\prime}) - (1-\gamma)\mu(s^{\prime})] \Delta V(s^{\prime})
    \end{split}
    \end{equation}
    Consequently, $\mathrm{E}_{s \sim d_{\pi_{i}}} \Delta(s)$ can be eliminated, and we can obtain:
    \begin{equation}
    \label{eq29}
    \begin{split}
        &(1-\gamma)\mathrm{E}_{s \sim \mu(s)} \Delta V(s) \\
        \leq &\mathrm{max}_{s} [\delta_{1}(s) + \delta_{2}(s)] \\
        = &\frac{2|r|_{\mathrm{max}}}{1-\gamma} \mathrm{max}_{s} \mathrm{D_{TV}} (\pi_{i}(\cdot|s) || \pi_{j}(\cdot|s))
    \end{split}
    \end{equation}
    By taking Equation~\ref{eq29} into Equation~\ref{eq20}, we get the result:
    \begin{equation}
    \label{eq20}
        |J(\pi_{i}) - J(\pi_{j})| \leq \frac{2|r|_{\mathrm{max}}}{(1-\gamma)^{2}} \max_{s} \mathrm{D_{TV}}(\pi_{i}(\cdot|s)||\pi_{j}(\cdot|s)),
    \end{equation}
\end{proof}
\end{appendices}


\end{document}